\newcommand{\mnmz}{\operatorname*{minimize}}
\newcommand{\st}{\text{subject to}}
\newcommand{\eps}{\varepsilon}
\newcommand{\dist}{\operatorname{distance}}
\newcommand{\xzero}{x_{0}}
\newcommand{\zzero}{z_{0}}
\newcommand{\pinit}{p_{\rm init}}
\newcommand{\pnext}{p_{\rm next}}
\newtheorem{theorem}{Theorem}
\title{Adversarial examples by perturbing high-level features \\ in intermediate decoder layers}
\author{Vojt\v{e}ch \v{C}erm\'ak\textsuperscript{\rm 1}, Luk\'a\v{s} Adam\textsuperscript{\rm 1} \\}
\begin{document}

\maketitle

\begin{abstract}
We propose a novel method for creating adversarial examples. Instead of perturbing pixels, we use an encoder-decoder representation of the input image and perturb intermediate layers in the decoder. This changes the high-level features provided by the generative model. Therefore, our perturbation possesses semantic meaning, such as a longer beak or green tints. We formulate this task as an optimization problem by minimizing the Wasserstein distance between the adversarial and initial images under a misclassification constraint. We employ the projected gradient method with a simple inexact projection. Due to the projection, all iterations are feasible, and our method always generates adversarial images. We perform numerical experiments on the MNIST and ImageNet datasets in both targeted and untargeted settings. We demonstrate that our adversarial images are much less vulnerable to steganographic defence techniques than pixel-based attacks. Moreover, we show that our method modifies key features such as edges and that defence techniques based on adversarial training are vulnerable to our attacks.
\end{abstract}

\section{Introduction}

In the past decade, the widespread application of deep neural networks raised security concerns as it creates incentives for attackers to exploit any potential weakness. For example, attackers could create traffic signs invisible to autonomous cars or make malware filters ignore threats. Those security concerns rose since \cite{szegedy2014intriguing} showed that deep neural networks are vulnerable to small perturbations of inputs that are designed to cause misclassification of a classifier. These adversarial examples are indistinguishable from natural examples as the perturbation is too small to be perceived by humans. 

We extend the usual approach to constructing adversarial examples that focuses on norm-bounded pixel modifications. Instead of perturbing the pixels directly, we perturb features learned by an encoder-decoder model. There are several ways to perturb the features from the decoder, ranging from high-level features collected from initial decoder layers to much finer features collected from decoder layers near the reconstructed image. When we perturb features in the initial layers near the latent image representation, even small perturbations may completely change the image meaning. On the other hand, perturbations on fine features are very close to pixel perturbations and carry little semantic information. As a compromise, we suggest perturbing intermediate decoder layers.

\begin{figure}[!ht]
\centering
\begin{tikzpicture}
  \def\tikzHA{1.5};
  \def\tikzHB{0.75};
  \def\tikzHC{1};  
  \def\tikzWA{0.4};
  \def\tikzWB{1.5};
  \def\tikzWC{0.4};
  \def\tikzS{0.15};
  \def\tikzWFA{0};
  \def\tikzWFB{\tikzWA};
  \def\tikzWFC{\tikzS+\tikzWA};
  \def\tikzWFD{\tikzS+\tikzWA+\tikzWB};
  \def\tikzWFE{2*\tikzS+\tikzWA+\tikzWB};
  \def\tikzWFF{2*\tikzS+2*\tikzWA+\tikzWB};
  \def\tikzWFG{3*\tikzS+2*\tikzWA+\tikzWB};
  \def\tikzWFH{3*\tikzS+2*\tikzWA+2*\tikzWB};
  \def\tikzWFI{4*\tikzS+2*\tikzWA+2*\tikzWB};
  \def\tikzWFJ{4*\tikzS+3*\tikzWA+2*\tikzWB};
  \def\tikzWFK{5*\tikzS+3*\tikzWA+2*\tikzWB};
  \def\tikzWFL{5*\tikzS+4*\tikzWA+2*\tikzWB};
  \def\tikzWFM{6*\tikzS+4*\tikzWA+2*\tikzWB};
  \def\tikzWFN{6*\tikzS+4*\tikzWA+3*\tikzWB};
  \def\tikzWFO{7*\tikzS+4*\tikzWA+3*\tikzWB};
  \def\tikzWFP{7*\tikzS+5*\tikzWA+3*\tikzWB};
  \def\tikzTA{\tikzHA+0.38};
  \def\tikzTB{-\tikzHA-0.35};  
  \draw [fill=white] (\tikzWFA,-\tikzHA) -- (\tikzWFB,-\tikzHA) -- (\tikzWFB,\tikzHA) --  (\tikzWFA,\tikzHA) -- cycle;
  \draw [fill=gray!20] (\tikzWFC,-\tikzHA) -- (\tikzWFD,-\tikzHB) -- (\tikzWFD,\tikzHB) --  (\tikzWFC,\tikzHA) -- cycle;
  \draw [fill=white] (\tikzWFE,-\tikzHB) -- (\tikzWFF,-\tikzHB) -- (\tikzWFF,\tikzHB) --  (\tikzWFE,\tikzHB) -- cycle;
  \draw [fill=gray!20] (\tikzWFG,-\tikzHB) -- (\tikzWFH,-\tikzHC) -- (\tikzWFH,\tikzHC) --  (\tikzWFG,\tikzHB) -- cycle;
  \draw [fill=white] (\tikzWFI,-\tikzHC) -- (\tikzWFJ,-\tikzHC) -- (\tikzWFJ,\tikzHC) --  (\tikzWFI,\tikzHC) -- cycle;
  \draw [fill=green!70] (\tikzWFK,-\tikzHC) -- (\tikzWFL,-\tikzHC) -- (\tikzWFL,\tikzHC) --  (\tikzWFK,\tikzHC) -- cycle;
  \draw [fill=gray!20] (\tikzWFM,-\tikzHC) -- (\tikzWFN,-\tikzHA) -- (\tikzWFN,\tikzHA) --  (\tikzWFM,\tikzHC) -- cycle;
  \draw [fill=white] (\tikzWFO,-\tikzHA) -- (\tikzWFP,-\tikzHA) -- (\tikzWFP,\tikzHA) --  (\tikzWFO,\tikzHA) -- cycle;
  \node at ({0.5*(\tikzWFA+\tikzWFB)},\tikzTB) {Source};
  \node at ({0.5*(\tikzWFE+\tikzWFF)},\tikzTB) {Latent};
  \node at ({0.5*(\tikzWFJ+\tikzWFK)},\tikzTB) {Intermediate};
  \node at ({0.5*(\tikzWFO+\tikzWFP)-0.25},\tikzTB) {Reconstructed};  
  \node at ({0.5*(\tikzWFC+\tikzWFD)},0) {Encoder};
  \node at ({0.5*(\tikzWFG+\tikzWFH)},0) {Decoder$_1$};
  \node at ({0.5*(\tikzWFM+\tikzWFN)},0) {Decoder$_2$};
  \node [draw] at ({0.5*(\tikzWFI+\tikzWFJ)-2.5},\tikzTA) {intermediate latent representation};
  \node [draw] at ({0.5*(\tikzWFK+\tikzWFL)+1.1},\tikzTA) {inserted perturbation};
  \draw [->] ({0.5*(\tikzWFI+\tikzWFJ)-2.5},\tikzTA-0.28) -- ({0.5*(\tikzWFI+\tikzWFJ)},\tikzHC+0.1);
  \draw [->] ({0.5*(\tikzWFK+\tikzWFL)+1.1},\tikzTA-0.28) -- ({0.5*(\tikzWFK+\tikzWFL)},\tikzHC+0.1);
\end{tikzpicture}
\end{figure}

Our way of generating adversarial images provides several advantages:
\begin{itemize}
 \item The perturbations are interpretable. They modify high-level features such as fur colour or beak length.
 \item The perturbations often follow edges. This means that they are less detectable by steganographic defence techniques \cite{johnson1998exploring}.
 \item The perturbations keep the structure of the unperturbed images, including hidden structures unrelated to their semantic meaning.
 \item The decoder ensures that the adversarial images have the correct pixel values; there is no need to project to $[0,1]$.
\end{itemize}
The drawback of our approach is that we cannot perturb an arbitrary image but only those representable by the decoder.

To obtain a formal optimization problem, we minimize the distance between the original and adversarial images in the \textit{reconstructed} space. To successfully generate an adversarial image, we add a constraint requiring that the perturbed image is misclassified. To handle this constraint numerically, we propose to use the projected gradient method. We approximate the projection operator by a fast method which always returns a feasible point. This brings additional benefits to our method:
\begin{itemize}
 \item The method works with feasible points. Even if it does not converge, it produces an adversarial image.
 \item Because we minimize the distance between the original and adversarial images, we do not need to specify their maximal possible distance as many methods do.
\end{itemize}

Numerical experiments present results on MNIST and ImageNet in both targeted and untargeted settings. They support all advantages mentioned above. We use both the $l_2$ and Wasserstein distances and show in which settings each performs better. We select a simple steganography defence mechanism and show that our attack is much less vulnerable to it than pixel-based perturbations. We examine the difference between perturbing the latent and intermediate layers and show that the intermediate layers indeed modify the high-level features of the reconstructed image. Finally, we show how our algorithm gradually incorporates the high-level features from the initial to the adversarial image. Our codes are available online to promote reproducibility.\footnote{https://anonymous.4open.science/r/latent-adv-examples-5C59}

\subsection{Related work}

The threat model based on small $l_p$ norm-bounded perturbations has been the main focus of research. It was originally introduced in \cite{szegedy2014intriguing}, where they used L-BFGS to minimize the $l_2$ distance between the original and adversarial images. Since then, many new ways how to both attack and defend against adversarial examples have been introduced. \cite{goodfellow2015explaining} introduced Fast Gradient Sign Method (FGSM), which is bounded by the $l_{\infty}$ metric. The authors used FGSM to generate new adversarial examples and used them to augment the training set in the adversarial training defence technique. A natural way to extend the FGSM attack is to iterate the gradient step as it is done in the Basic Iterative Method of \cite{kurakin2017adversarial} and Projected Gradient Descend attack of \cite{madry2019deep}. \cite{papernot2015limitations} argued to use the $l_0$ distance to model human perception and proposed a class of attacks optimized under $l_0$ distance. \cite{carlini2017evaluating} designed strong attack algorithms based on optimizing the $l_{0}$, $l_{2}$ and $l_{\infty}$ distances.
 
\cite{xiao2019generating} used GANs to generate noise for adversarial perturbation. Other authors used generative models in defence against adversarial examples. MagNet of \cite{meng2017magnet} used reconstruction error of variational autoencoders to detect adversarial examples. A similar idea was used in DefenceGAN of \cite{samangouei2018defensegan}, where they cleaned adversarial images by matching them to their representation in latent space of the GAN trained on clean data.
 
We use similar optimization techniques as in decision-based attacks such as Boundary attack of \cite{brendel2018decisionbased} and 
HopSkipJump attack of \cite{chen2020hopskipjumpattack}. The optimization techniques always keep intermediate results in the adversarial region and always outputs misclassified examples.
 
Some works have already investigated adversarial examples outside of the $l_p$ norm. \cite{wong2020wasserstein} used the Sinkhorn approximation of the Wasserstein distance to create adversarial images with the same geometrical structure as the original images. The Wasserstein distance is more suitable than the standard $l_p$ distance metrics because it preserves differences in high-level features. We increase this benefit by additionally modifying the high-level features instead of pixels.
 
The Unrestricted Adversarial Examples of \cite{song2018constructing}, are adversarial examples constructed from scratch using conditional generative models. Our paper differs in several aspects. First, we use the intermediate instead of the latent representation to perturb high-level features. Second, we use an unconditional generator, which allows us to freely move in the intermediate latent space and perform several operations such as the projection.

\section{Proposed Formulation}

Finding an adversarial image amounts to finding some image $x$ which is close to a given $x_0$, and the neural network misclassifies it. For reasons mentioned in the introduction, we do not work with the original images but with their latent representations. Therefore, we need a decoder $D$ which maps the latent representation $z$ to the original representation $x$. Since we want to insert some perturbation $p$ to the intermediate layer in the decoder, we split the decoder into two parts $D = D_2\circ D_1$. We call $D_1(z)$ the intermediate latent representation and $D_2(D_1(z))$ the reconstructed image. 

The mathematical formulation of finding an adversarial image close to $x_0$ reads:
\begin{equation}\label{eq:problem}
    \aligned
    \mnmz_{p}\qquad &\dist(x, \xzero) \\
    \st\qquad &x = D_2(D_1(\zzero) + p), \\
    &\xzero = D_2(D_1(\zzero)), \\
    &g(x) \le 0, \\
    &x\in[0,1]^n.
    \endaligned
\end{equation}
Here, $z_0$ is the latent representation of the image $x_0$. We insert the perturbation $p$ to the intermediate latent representation $D_1(z_0)$ and only then reconstruct the image by applying the second part of the decoder $D_2$.

When $D_1$ is the identity, and $D_2$ is the decoder, we obtain the case when perturbations appear in the latent space. Similarly, when $D_1$ is the decoder, and $D_2$ is the identity, we recover the standard pixel perturbations. Therefore, our formulation \eqref{eq:problem} generalizes both approaches.

\subsection{Objective} The objective function measures the distance between the adversarial $x$ and the original $x_0$ image in the reconstructed space. We will use the $l_2$ norm and the Wasserstein distance. Having the distance in the objective is advantageous because we do not need to specify the $\eps$-neighborhood when this distance is in the constraint.

\subsection{Constraints}

Formulation \eqref{eq:problem} has multiple constraints. Constraint $x\in[0,1]^n$ says that the pixels need to stay in this range. Since $x$ is an output of the decoder, this constraint is always satisfied.

The other constraint $g(x)\le 0$ is a misclassification constraint. We employ the margin function
\begin{equation}\label{eq:margin}
m(x, k) = \underset{i, i \neq k}{\text{max }}F_i(x) - F_k(x),
\end{equation}
where $F = [F_{1}(x), ..., F_{m}(x)]$ is a classifier with $m$ classes and $k$ is a class index. We define this constraint for both targeted and untargeted attacks:
\begin{equation}
  \begin{aligned}
  \text{targeted}: \quad& g(x) = m(x, k), \\
  \text{untargeted}: \quad& g(x) = - m(x, \operatorname{argmax}_{i=1,\dots,m}F_i(x)).
  \end{aligned}
\end{equation}
For the targeted case, we need to specify the target class $k$, while for the untargeted case, the class $k$ is the classifier prediction. For the latter case, we need to multiply the margin by $-1$ as we require the misclassified images to satisfy $g(x) \le 0$.

We will later use that the original image always satisfies $g(x_0) > 0$ because otherwise the original image is already misclassified, and no perturbation $(p=0)$ is the optimal solution of \eqref{eq:problem}.

\section{Proposed Solution Method}

We propose to solve \eqref{eq:problem} by the projected gradient method \cite{nocedal2006numerical} with inexact projection. Our algorithm produces a feasible point at every iteration. Therefore, it always generates a misclassified image.

\subsection{Proposed algorithm}

Since we optimize with respect to $p$, we define the objective and constraint by
\begin{equation}\label{eq:fg}
\aligned
f(p) &= \dist(D_2(D_1(z_0) + p), x_{0}), \\
\hat g(p) &= g(D_2(D_1(z_0) + p)).
\endaligned
\end{equation}
We describe our procedure in Algorithm \ref{alg2}. First, we initialize $p$ by some feasible $\pinit$ and then run the projected gradient method for $\max_{\rm iter}$ iterations. Step \ref{alg2_step1} computes the optimization step by minimizing the objective $f$ and step \ref{alg2_step2} uses the inexact projection (described later) to project the suggested iteration $\pnext$ back to the feasible set. As we will see later, the constraint $\pnext\neq p$ implies that $\pnext$ was not feasible, and it was projected onto the boundary. In such a case, steps \ref{alg2_if}-\ref{alg2_if2} ``bounce away'' from the boundary. Since $-\nabla \hat g(p)$ points inside the feasible set, step~\ref{alg2_while} finds some $\beta>0$ such that $p-\beta\nabla \hat g(p)$ lies in the interior of the feasible set.

\begin{algorithm}
    \caption{for finding adversarial images by solving \eqref{eq:problem}}
    \label{alg2}
    \begin{algorithmic}[1]
    \State $p \gets \pinit$

    \For{$i \in \{0,\dots,\max_{\rm iter}\}$}
        \If{$i>0$ \textbf{and} $\pnext \neq p$} 
        \State $\beta \gets \beta_i$ \label{alg2_if}
        \While{ $\hat g(p - \beta \nabla \hat g(p)) \geq 0$} \label{alg2_while}
            \State $\beta \gets \frac{\beta}{2} $
        
        \EndWhile
        \State $p \gets p - \beta \nabla \hat g(p) $ \label{alg2_if2}
        \EndIf
        
        \State $\pnext \gets p - \alpha_i \nabla  f(p) $ \label{alg2_step1}
        \State $p \gets \textsc{Project}(\pnext, p, \delta) $ \label{alg2_step2}
    \EndFor
    \State \textbf{return} $p$
\end{algorithmic}
\end{algorithm}

\subsection{Inexact projection}

Step \ref{alg2_step2} in Algorithm \ref{alg2} uses the inexact projection. We summarize this projection in Algorithm \ref{alg1}. Its main idea is to find a feasible point on the line between $p$ and $\pnext$. This line is parameterized by $(1-c)p + c\pnext$ for $c\in[0,1]$. Due to the construction of Algorithm \ref{alg2}, $p$ is always strictly feasible and therefore $\hat g(p) < 0$. If $\hat g(\pnext) < 0$, then $\pnext$ is feasible and we accept it in step \ref{alg1_pnext}. In the opposite case, we have $\hat g(p) < 0$ and $\hat g(\pnext)\ge 0$ and we can use the bisection method to find some $c\in (0,1)$ for which the constraint is satisfied. The standard bisection method would return a point with $\hat g((1-c)p+c\pnext) = 0$, however, since the formulation \eqref{eq:problem} contains the inequality constraint, we require the constraint value to lie only in some interval $[-\delta,0]$ instead of being zero.

\begin{algorithm}
    \caption{Inexact projection onto the feasible set}
    \label{alg1}
    \begin{algorithmic}[1]
    \Procedure{Project}{$\pnext, p, \delta$}
    \State \textbf{assert} $\hat g(p) < 0$
    \If{$\hat g(\pnext) < 0$}
        \State \textbf{return} $\pnext$ \label{alg1_pnext}
    \EndIf
    \State $a\gets 0$, $b\gets 1$, $c\gets \frac12(a+b)$
    \While{$\hat g((1-c)p + c\pnext) \notin [-\delta, 0]$}
        \If{$\hat g((1-c)p + c\pnext) > 0$}
            \State $b\gets c$
        \Else
            \State $a\gets c$
        \EndIf
        \State $c \gets \frac12(a+b)$
    \EndWhile
    \State \textbf{return} $(1-c) p + c\pnext$
    \EndProcedure
    \end{algorithmic}
\end{algorithm}

Figure \ref{fig:proj} depicts the projection in the \textit{intermediate latent space}. The light-grey region is feasible, and the white region is infeasible. The infeasible region always contains $D_1(z_0)$, while the feasible region always contains $D_1(z_0)+p$. If $D_1(z_0)+\pnext$ lies in the feasible region, the projection returns $\pnext$. In the opposite case, we look for some point on the line between $D_1(z_0)+p$ and $D_1(z_0)+\pnext$. The points which may be accepted are depicted by the thick solid line. The length of this line is governed by the threshold $\delta$. The extremal case $\delta=0$ always returns the point on the boundary, while $\delta=\infty$ prolongs the line to $D_1(z_0)+p$.

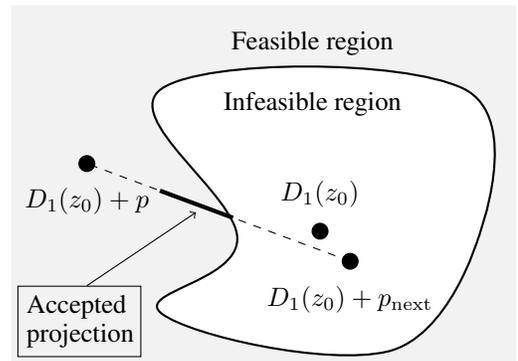
\begin{figure}[!ht]
\caption{Inexact projection in the intermediate latent space. The points which may be accepted are depicted by the thick solid line.}
\label{fig:proj}
\centering
\begin{tikzpicture}
  \def\tikzA{0.555};
  \def\tikzB{0.2775};
  \def\tikzBoxA{2.7};
  \def\tikzBoxB{3.1};
  \def\tikzBoxC{-4};
  \def\tikzBoxD{-1.7};
  \coordinate  (O) at (0,0);
  \filldraw [gray!10, fill=gray!10] plot coordinates {(\tikzBoxA,\tikzBoxB) (\tikzBoxC,\tikzBoxB) (\tikzBoxC,\tikzBoxD) (\tikzBoxA,\tikzBoxD) (\tikzBoxA,\tikzBoxB)};
  \filldraw [thick, fill=white]  plot[smooth, tension=.7] coordinates {(2.2,0) (2,2) (-2,2) (-1,0) (-2,-1) (1,-1.5) (2.2,0)};
  \node at (0,2.6) {Feasible region};
  \node at (0,1.8) {Infeasible region};
  \filldraw (0.5,-0.3) circle (3pt) node [yshift=-0.5cm] {$D_1(z_0) + \pnext$};
  \filldraw (-3,1) circle (3pt) node [yshift=-0.5cm] {$D_1(z_0) + p$};
  \filldraw (0.1,0.1) circle (3pt) node [yshift=0.5cm] {$D_1(z_0)$};
  \draw [dashed] (0.5,-0.3) -- (-3,1);
  \draw [ultra thick] ({\tikzA*0.5+(1-\tikzA)*(-3)},{\tikzA*(-0.3)+(1-\tikzA)*(1)}) -- ({\tikzB*0.5+(1-\tikzB)*(-3)},{\tikzB*(-0.3)+(1-\tikzB)*(1)});
  \node [draw,text width=1.4cm] at (-3.1,-1.1) {Accepted \\projection};  
  \draw [->] (-3.0,-0.625) -- (-1.55,0.35);
\end{tikzpicture}
\end{figure}

\subsection{Analysis of the algorithm}

The preceding text mentioned that Algorithm \ref{alg2} generates a strictly feasible point $p$, thus $g(D_2(D_1(z_0)+p)) < 0$. We prove this in the next theorem and add a speed of convergence. We postpone the proof to the appendix.

\begin{theorem}\label{thm:conv}
    Assume that $g\circ D_2$ is a continuous function. Then Algorithm \ref{alg1} generates a strictly feasible point $p$ in a finite number of iterations. If $g\circ D_2$ is moreover Lipschitz continuous with modulus $L$, then Algorithm \ref{alg1} converges in at most $\log_2 \frac{L\|\pnext-p\|}{\delta}$ iterations.
\end{theorem}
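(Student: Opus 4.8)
I would work with the one-dimensional function
$\varphi(c)=\hat g\bigl((1-c)p+c\pnext\bigr)$ for $c\in[0,1]$, which is exactly what Algorithm~\ref{alg1} evaluates along the segment from $p$ to $\pnext$. Its endpoints are fixed by the calling context: the \texttt{assert} guarantees $\varphi(0)=\hat g(p)<0$, and the bisection loop is entered only after the test ``$\hat g(\pnext)<0$'' fails, hence $\varphi(1)=\hat g(\pnext)\ge 0$. Since $c\mapsto D_1(z_0)+(1-c)p+c\pnext$ is affine, continuity of $g\circ D_2$ makes $\varphi$ continuous on $[0,1]$, and if $g\circ D_2$ is $L$-Lipschitz then $\varphi$ is Lipschitz with modulus $\Lambda:=L\|\pnext-p\|$. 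I also use $\delta>0$; for $\delta=0$ the accepted set $\{\varphi=0\}$ is a single level and finite termination fails in general.

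\textbf{Finiteness under continuity.} If $\hat g(\pnext)<0$ the procedure returns $\pnext$ with no loop iterations and it is feasible, so assume the loop runs. First I would record the invariant -- immediate from the two update rules and the loop-exit condition -- that at all times $\varphi(a)<0\le\varphi(b)$, with in addition $\varphi(a)<-\delta$ once $a$ has been moved and $\varphi(b)>0$ once $b$ has. Then argue by contradiction: if the loop never halts, the nested brackets $[a_m,b_m]$ have lengths $2^{-m}\to 0$ and shrink to a single point $c^\star$, and by pigeonhole at least one endpoint is updated infinitely often. If both are, then among the tested midpoints (which all converge to $c^\star$) a subsequence from the ``$b\gets c$'' steps has $\varphi>0$ and a subsequence from the ``$a\gets c$'' steps has $\varphi<-\delta$, so continuity forces $\varphi(c^\star)\ge 0$ and $\varphi(c^\star)\le-\delta$ simultaneously, impossible since $\delta>0$. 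If only one endpoint moves infinitely often, the other freezes and $c^\star$ equals the frozen endpoint; but then every later tested midpoint lies strictly on one side (e.g.\ if the left endpoint froze, all later steps are ``$b\gets c$'', giving $\varphi>0$, while the frozen left endpoint has $\varphi<0$ -- either $\varphi(0)=\hat g(p)<0$ or $\varphi<-\delta$), again contradicting $\varphi(\text{midpoints})\to\varphi(c^\star)$. Hence the loop stops after finitely many iterations, and its output satisfies $\hat g\in[-\delta,0]$, so it is feasible (strictly so except when $\hat g=0$ exactly, a boundary case the ``bounce'' step of Algorithm~\ref{alg2} removes anyway).

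\textbf{Rate under Lipschitzness.} After $k$ loop iterations the bracket $[a_k,b_k]$ has length $2^{-k}$, so the midpoint $c=\tfrac12(a_k+b_k)$ tested next is within $2^{-k-1}$ of each endpoint, and the invariant $\varphi(a_k)<0\le\varphi(b_k)$ together with $\Lambda$-Lipschitzness squeezes
\[
 -\Lambda\,2^{-k-1}\;\le\;\varphi(b_k)-\Lambda\,2^{-k-1}\;\le\;\varphi(c)\;\le\;\varphi(a_k)+\Lambda\,2^{-k-1}\;<\;\Lambda\,2^{-k-1}.
\]
Thus $|\varphi(c)|<\Lambda\,2^{-k-1}$, which is $\le\delta$ as soon as $k\ge\log_2\frac{L\|\pnext-p\|}{\delta}$; at that point the tested value is within $\delta$ of the constraint boundary and, sitting between the strictly feasible and the infeasible endpoint, meets the exit test $\varphi(c)\in[-\delta,0]$ -- so the loop terminates within $\log_2\frac{L\|\pnext-p\|}{\delta}$ iterations. (The one-sided shape of the acceptance window $[-\delta,0]$ is only a minor wrinkle in pinning the exact constant.)

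\textbf{Expected obstacle.} The bisection-rate estimate is routine. The substantive point is finiteness under mere continuity: because the tolerance window $[-\delta,0]$ is one-sided, one cannot just invoke the intermediate value theorem to land on a root, so termination must be forced via the collapse of the nested brackets to a point and then the case analysis on which endpoint keeps moving -- and making that case analysis (especially the sub-case where one endpoint stalls forever) watertight is where the care is needed.
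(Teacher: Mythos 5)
Your overall strategy is the same as the paper's: restrict $\hat g$ to the segment via $\varphi(c)=\hat g((1-c)p+c\pnext)$, use the halving of the bracket for the rate, and use continuity for termination. Your finiteness argument is actually more careful than the paper's, which simply asserts termination from continuity; your nested-bracket analysis with the case split on which endpoint moves infinitely often is correct and fills in that assertion.

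The rate argument, however, contains a genuine gap, and it is exactly the point you wave off as ``a minor wrinkle.'' Writing $\Lambda=L\|\pnext-p\|$: from $\varphi(a_k)<0\le\varphi(b_k)$ and Lipschitzness you only get $|\varphi(c)|\le\Lambda 2^{-k-1}$, and $|\varphi(c)|\le\delta$ does \emph{not} imply $\varphi(c)\in[-\delta,0]$; a midpoint with $\varphi(c)\in(0,\delta]$ is rejected and the loop continues with $b\gets c$. The asymmetry of the window matters because the endpoint invariants are themselves asymmetric: once $a$ has been updated you know $\varphi(a)<-\delta$, but the \emph{initial} left endpoint only satisfies $\varphi(0)=\hat g(p)<0$, which can be arbitrarily close to $0$. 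If $-\hat g(p)\ll\delta$, one can construct a $\Lambda$-Lipschitz $\varphi$ with $\varphi(2^{-j})\in(0,\delta]$ for all $j$ up to about $\log_2\frac{\Lambda}{-\hat g(p)}$, so every midpoint is rejected and $a$ never moves for that many iterations; the loop then exceeds the claimed $\log_2\frac{\Lambda}{\delta}$ by $\log_2\frac{\delta}{-\hat g(p)}$, which is unbounded. The bound that your squeeze actually supports is $\log_2\frac{L\|\pnext-p\|}{\min(\delta,\,-\hat g(p))}$, since $\Lambda 2^{-k}\ge\varphi(b_k)-\varphi(a_k)\ge\min(\delta,-\hat g(p))$ while the loop is still running. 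To be fair, the paper's own proof shares this flaw: it claims $\tilde g(b^k)-\tilde g(a^k)\ge\delta$ whenever the loop has not terminated, which fails for every $k$ before the first update of $a$. So you have reproduced the paper's gap rather than introduced a new one, but it is a real gap in the stated constant, not a cosmetic one.
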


The previous theorem implies that all iterations $p$ produced by Algorithm \ref{alg2} are strictly feasible.

\section{Numerical Experiments}

This section shows numerical experiments on the image datasets MNIST \cite{lecun2010mnist} and ImageNet \cite{deng2009imagenet}.

\subsection{Used architectures}

As the classifiers to be fooled in our MNIST experiments, we train a non-robust classifier with architecture based on VGG blocks \cite{simonyan2015deep} and use the publicly available robust model \cite{madry2019deep}. We generate new digits using an unconditional ALI generator  \cite{donahue2017adversarial,dumoulin2017adversarially}. For ImageNet we use EfficientNet B0 \cite{tan2020efficientnet} as a classifier and BigBiGAN \cite{donahue2019large} as an encoder-decoder model.

\subsection{Numerical setting}

As an objective we use the standard $l_2$ distance and the Sinkhorn approximation \cite{cuturi2013sinkhorn} to the Wasserstein distance. This approximation adds a weighted Kullback-Leibler divergence to solve
\begin{equation}\label{eq:sinkhorn}
    \aligned
    \mnmz_{W}\qquad &\langle C,W\rangle + \lambda \langle W, \log W\rangle \\
    \st\qquad &W1 = x_0, W^\top 1=x, W\ge 0.
    \endaligned
\end{equation}
The optimal value of this problem is the Sinkhorn distance between images $x_0$ and $x$. The matrix $C$ specifies the distance between pixels. Since $x_0$ and $x$ are required to be probability distributions, we normalize the pixels from decoder output to sum to one. We do not need to impose the non-negativity constraint because the decoder outputs positive pixel intensities. We use the implementation from the GeomLoss package \cite{feydy2019interpolating}. Since the number of elements $W$ from \eqref{eq:sinkhorn} equals the number of pixels squared, using the Sinkhorn distance was infeasible for ImageNet, where we use only the $l_2$ distance.

For MNIST, we randomly generate the latent images $z_0$ and then use the decoder for reconstructed images $x_0$. We required that the classifier predicts the digit into the correct class with the probability of at least 0.99. For ImageNet, this technique produces images of lower quality, and we fed the encoder with real images and used their encoded representation.

We run all experiments for 1000 iterations. Algorithm \ref{alg2} requires stepsizes $\alpha$ and $\beta$, while Algorithm \ref{alg1} requires the threshold $\delta$. We found that the stepsize $\alpha$ is not crucial because even if it is large, the projection will reduce it. We therefore selected $\alpha=1$. For the second stepsize $\beta$, we selected a simple annealing scheme with exponential decay. In both cases, we used the normed gradient. For the threshold we selected $\delta=1$. Since the margin function \eqref{eq:margin} is bounded by $1$, Figure~\ref{fig:proj} then implies that the acceptable projection increases all the way to $D_1(z_0)+p$. Since at least one iteration of the projection is performed, the distance to the boundary at least halves. We found this to be a good compromise between speed and approximative quality. Theorem \ref{thm:conv} then implies that Algorithm \ref{alg2} converges in at most $\log_2 L$ iterations.

We run our experiments on an Nvidia GPU with 6GB of VRAM.

\subsection{Numerical results for MNIST}

\begin{table*}[!ht]
\centering

\begin{tabular}{@{}llllllll@{}}
\toprule
       Network & Attack          & \multicolumn{3}{c}{$l_2$ distance} & \multicolumn{3}{c}{Wasserstein distance} \\ \cmidrule(lr){3-5} \cmidrule(lr){6-8}
       &          &     CW attack & $l_2$ attack & Wasserstein  &              CW  attack & $l_2$ attack & Wasserstein \\
\midrule
Non-robust & Untargeted &             7.85 &     15.28 &              20.87 &                      1.77 &      1.49 &               0.08 \\
       & Targeted &            10.07 &     16.78 &              24.01 &                      2.76 &      1.83 &               0.12 \\
Robust & Untargeted &            17.36 &     22.78 &              24.68 &                      0.78 &      2.48 &               0.19 \\
       & Targeted &            24.49 &     27.05 &              29.75 &                      1.72 &      3.12 &               0.32 \\
\bottomrule
\end{tabular}

\caption{Mean $l_2$ and Wasserstein distances between original and adversarial images over 90 randomly selected images.}
\label{table:results}

\end{table*}

\begin{figure*}[!ht]
    \includegraphics[width=\textwidth]{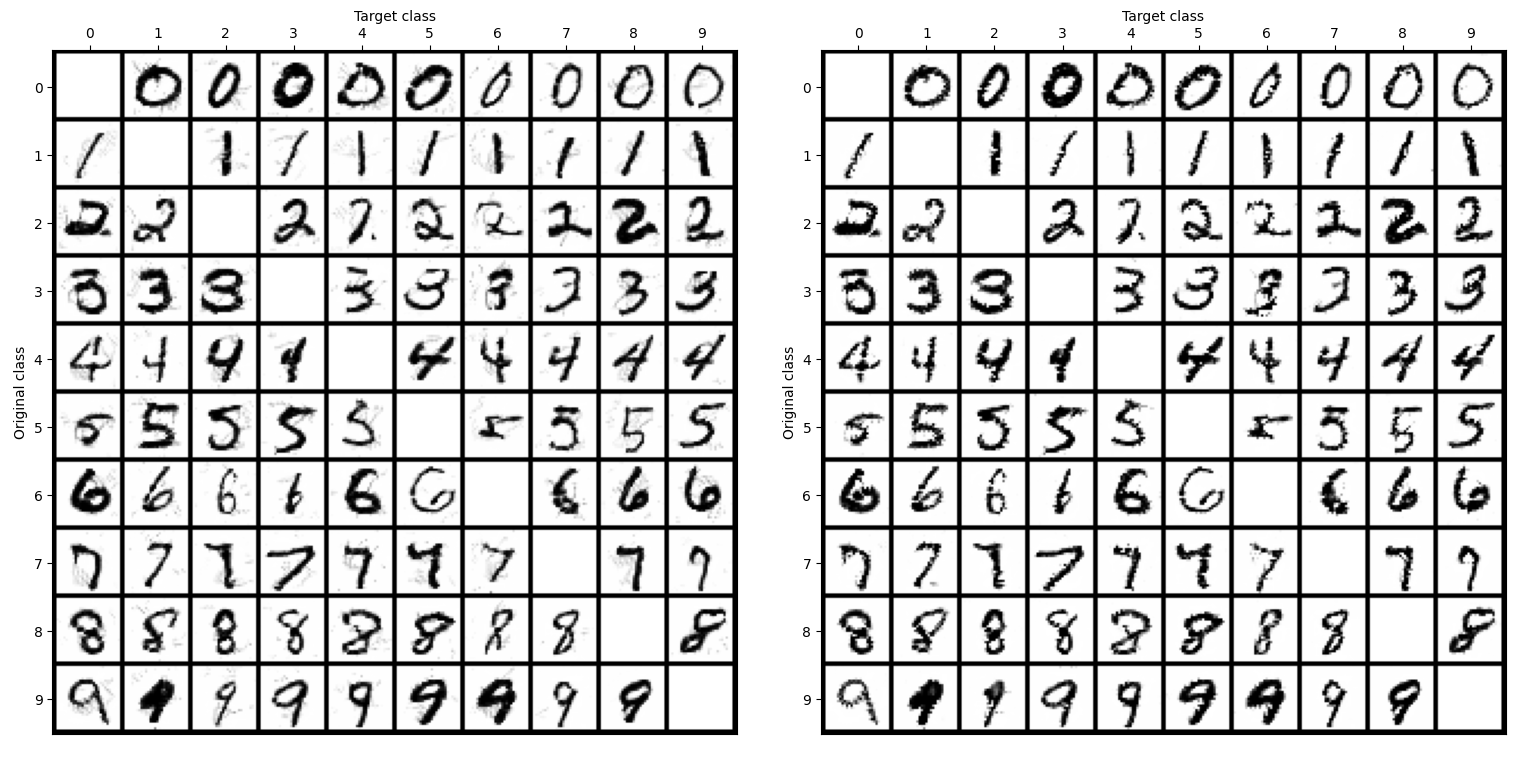}
\caption{Targeted $l_2$ attacks (left) and Wasserstein attacks (right). Rows represent the original while columns the target class.}
\label{fig:compare_matrix}
\end{figure*}

For the numerical experiments, we use the notation of $l_2$ and Wasserstein attacks based on the objective function in the model \eqref{eq:problem}. For MNIST, we would like to stress that none of the images was manually selected, and all images were generated randomly.

Figure \ref{fig:compare_matrix} shows targeted $l_2$ attacks (left) and targeted Wasserstein attacks (right). Even though both attacks performed well, the Wasserstein attack shows fewer grey artefacts around the digits. This is natural as the Wasserstein distance considers the spatial distribution of pixels. The is not the case for the $l_2$ distance.

Table \ref{table:results} shows a numerical comparison between these two attacks and the CW $l_2$ attack \cite{carlini2017evaluating} implemented in the Foolbox library \cite{rauber2017foolboxnative}. Besides the targeted attacks, we also implemented untargeted attacks and attacks against the robust network (additional figures are in the appendix). We show the $l_2$ and the Wasserstein distances. The table shows that if we optimize with respect to the Wasserstein distance, the Wasserstein distance between the original and adversarial images (column 8) is the smallest. The same holds for the $l_2$ distance (column 4). Even though the CW attack generated smaller values in the $l_2$ distance, this happened because it is not restricted by the decoder. Moreover, as we will see from the next figure, the CW attack generates lower-quality images which do not keep the hidden structure of the original images. It is also not surprising that the needed perturbations are smaller for untargeted attacks and the non-robust network.

\begin{figure*}[!ht]
    \centering
    \includegraphics[width=0.9\textwidth]{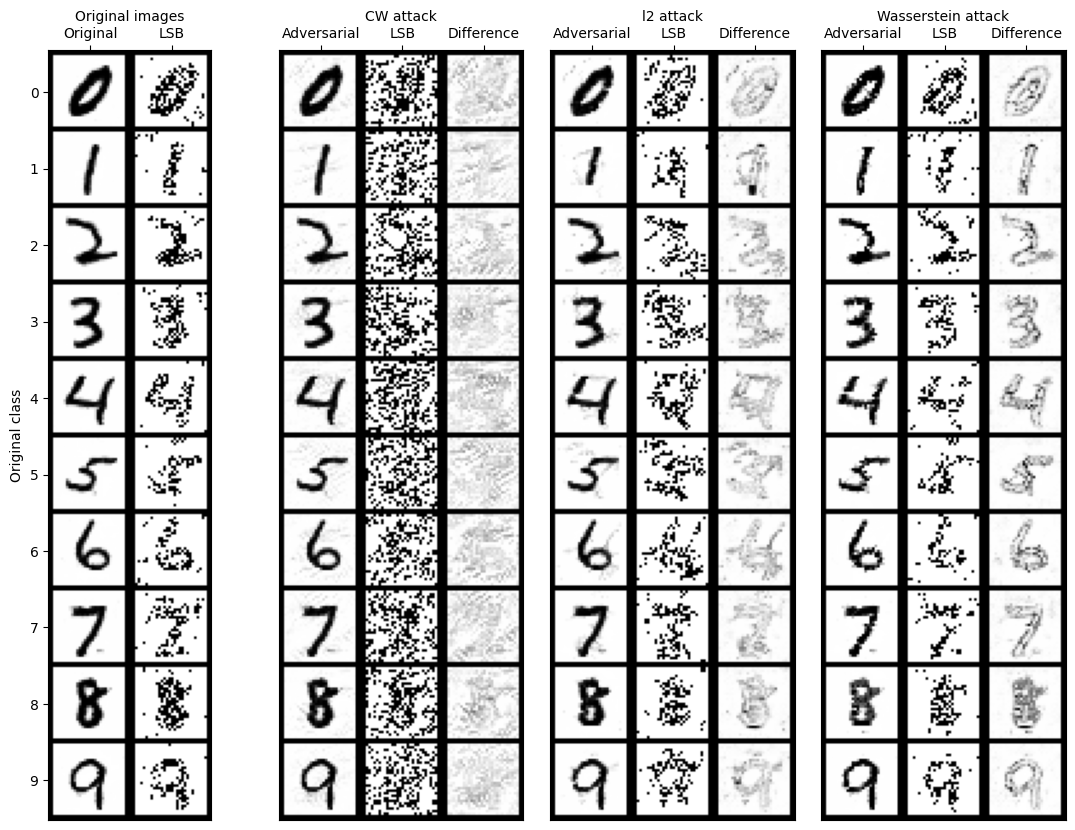}
\caption{Comparison of CW and our attacks. Least significant bit (LSB) is defined in \eqref{eq:lsb}, the difference is between the original and adversarial images. The figure shows that out attacks keep the hidden structure of the data (LSB) and that the perturbations are not located randomly but around edges.}
\label{fig:compare_attacks}
\end{figure*}

Figure \ref{fig:compare_attacks} shows a visual comparison for untargeted attacks. It shows the original image (left) and the CW (middle left), $l_2$ (middle right) and Wasserstein (right) attacks. Each three columns contain the adversarial image, the least significant bit representation and the difference between the original and adversarial images. Least significant bit is a standard steganographic defence technique which captures changes in the hidden structure of the data. For pixel intensities in $x\in [0,1]$, it is defined by
\begin{equation}\label{eq:lsb}
\operatorname{lsb}(x) = \operatorname{mod}(\operatorname{round}(255x), 2).
\end{equation}
Therefore, it transforms the image $x\in[0,1]$ into its $8$-bit representation and takes the last (least significant) bit.

A significant difference between our attacks and the pixel-based CW attack is that our attacks provide interpretability of perturbations. Figure~\ref{fig:compare_attacks} shows that there is no clear pattern in the pixel perturbations of the CW attack, neither in the least significant bit nor in the (almost) uniformly distributed perturbations. On the other hand, our method keeps the least significant bit structure. This implies that the least significant bit defence can easily recognize the CW attack, while our attacks cannot be recognized. At the same time, our methods concentrate the attack around the edges of the image. Therefore, our attacks are compatible with the basic steganographic rule stating that attacks should be concentrated mainly around key features such as edges. We show the same figure for the robust classifier in the appendix.

\subsection{Numerical results for Imagenet}

When fooling the ImageNet classifier, we use only the targeted version of our algorithm to prevent trivial class changes in the case of the untargeted attack, such as changing the dog's breed. We manually select several pictures of various animals as original images. As the target class we chose broccoli due to its distinct features.

Figure~\ref{fig:imagenet_results} demonstrates the differences between original and adversarial images. In all experiments, we perturbed the second intermediate layer of the BigBiGAN decoder. The first row shows the original images, while the second row shows the adversarial images misclassified all as broccoli. The third row highlights the difference between original and adversarial images by taking the mean square root of the absolute pixel difference across all channels. This difference shows interpretable silhouettes, which allows us to assign semantical meaning to many perturbations. Most of the differences are concentrated in key components of the animals, such as changes of texture, colouring and brightness of their semantically meaningful components. For example, the dog fur has a different texture, and the grass has lighter colour. Another example is the bird's beak, which is longer in the third and thicker in the fourth adversarial image. The difference plot also shows that significant perturbations happen around animals edges or key features such as eyes and nose.

\begin{figure}[!ht]
    \includegraphics[width=\linewidth]{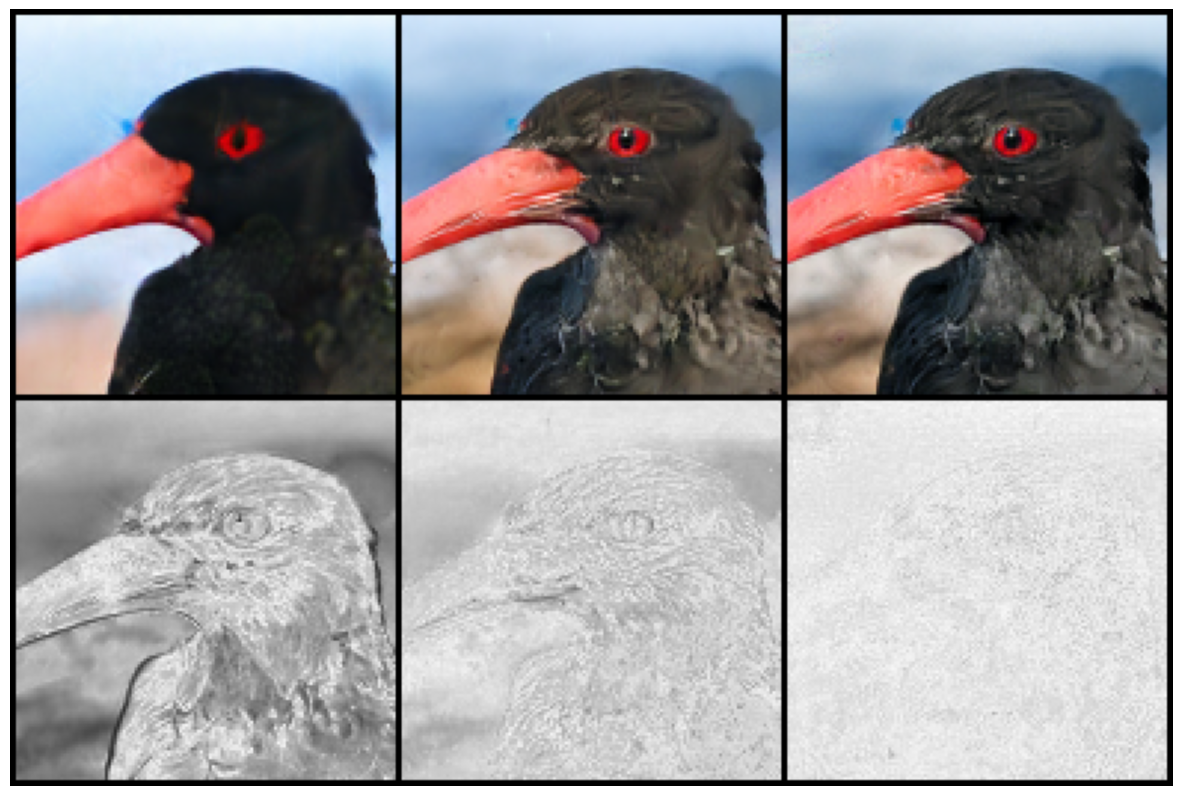}
\caption{Effect of the intermediate layer choice. The columns show the perturbation in the first (left), second (middle) and third (right) intermediate layer.}
\label{fig:imagenet_intermediate}
\end{figure}

\begin{figure*}[!ht]
 \centering
    \includegraphics[width=0.8\textwidth]{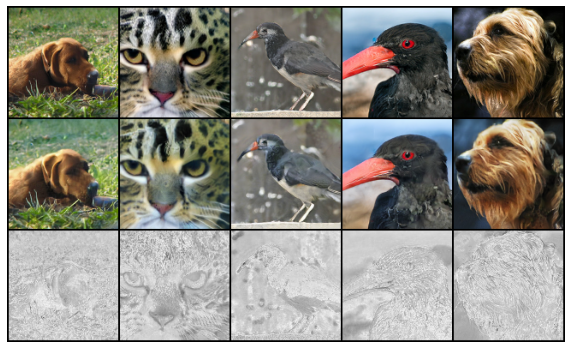}
\caption{Unperturbed image (top), perturbed image (middle) and their difference (bottom). The perturbations happen around key animal features.}
\label{fig:imagenet_results}
\end{figure*}

\begin{figure*}[!ht]
    \includegraphics[width=\textwidth]{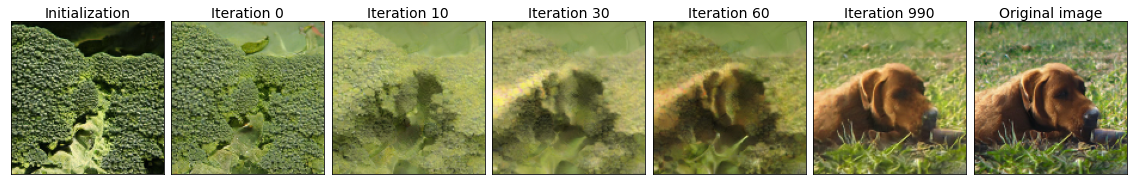}
\caption{Development of perturbed images over iterations. Broccoli features get gradually incorporated into the dog.}
\label{fig:imagenet_development}
\end{figure*}

The crucial idea of our paper is to perturb the intermediate decoder layers. However, we can place the perturbation in any intermediate layer, each with a different effect. The earlier layers generate high-level features, while later layers generate much finer features or even perturbations in pixels without any semantic meaning. Figure \ref{fig:imagenet_intermediate} shows the impact of the choice of the intermediate layer. The columns show the adversarial images when perturbing the first (left column), second (middle column) and third (right column) intermediate layer. The figure shows that the deeper we perturb the decoder, the more the perturbations shift from high-level to finer perturbations. The first layer results in a purely black bird and uniform background. The silhouettes show that while the first layer perturbs the colour of the chest,  the beak or the background, the silhouette in the third layer disappears, and the perturbations amount almost to pixel perturbations. We present additional results of perturbing other layers in the appendix.

Figure~\ref{fig:imagenet_development} documents a similar effect of high-level perturbations. While the previous figure showed the dependence of high-level features on the intermediate layer choice, this figure shows this dependence on the development of iterations in Algorithm \ref{alg2}. The left image shows the broccoli image used for the initialization of our algorithm. The right image shows the unperturbed dog. The middle five images visualize how the perturbations modify the broccoli image as the number of iterations increases. At the beginning of the algorithm run, the perturbations are interpretable: the dog is reconstructed by high-level broccoli features. As the algorithm gradually converges, the perturbed image increasingly resembles the original dog as the broccoli features fuse with the dog image. Some of the original broccoli features remain integrated into the final adversarial image. For example, the next-to-last picture contains a slightly lighter green shade than the original dog picture. Similarly, we can interpret the changes in the dog fur and muzzle textures as they originate from the broccoli texture. We would like to point out that all central images are classified as broccoli due to the construction of our algorithm. We show the impact of the choice of the initial broccoli image on the final adversarial image in the appendix.

\section{Conclusion}

This paper presented a novel method for generating adversarial images. Instead of the standard pixel perturbation, we use an encoder-decoder model and perturb high-level features from intermediate decoder layers. We showed that our method generates high-quality adversarial images in both targeted and untargeted settings on the MNIST and ImageNet datasets. We demonstrated that our adversarial images indeed perturb high-level features. This makes them more resilient to being recognized as adversarial by standard defence techniques.


\bibliography{bibliography}


\appendix

The Appendix presents the proof of Theorem \ref{thm:conv} and shows additional results.

\section{Proof of Theorem \ref{thm:conv}}

Recall that a function $f$ is Lipschitz continuous with constant $L$ if
$$
\|f(x)-f(y)\| \le L\|x-y\|
$$
for all $x$ and $y$.

\begin{theorem}
    Assume that $g\circ D_2$ is a continuous function. Then Algorithm \ref{alg1} generates a strictly feasible point $p$ in a finite number of iterations. If $g\circ D_2$ is moreover Lipschitz continuous with modulus $L$, then Algorithm \ref{alg1} converges in at most $\log_2 \frac{L\|p\|}{\delta}$ iterations.
\end{theorem}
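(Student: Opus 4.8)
The plan is to reduce the whole statement to the behaviour of a one–dimensional bisection. First I would set $\hat g(p)=(g\circ D_2)(D_1(z_0)+p)$, so that $\hat g$ is $g\circ D_2$ precomposed with the translation $p\mapsto D_1(z_0)+p$ (recall $D_1(z_0)$ is a fixed vector). Translations are continuous and $1$-Lipschitz, so $\hat g$ is continuous whenever $g\circ D_2$ is and is $L$-Lipschitz whenever $g\circ D_2$ is. Along the segment that Algorithm~\ref{alg1} bisects I would then introduce $\phi(c)=\hat g\big((1-c)p+c\,\pnext\big)$ for $c\in[0,1]$; this $\phi$ is continuous, and in the Lipschitz case $|\phi(c)-\phi(c')|\le L\|\pnext-p\|\,|c-c'|$ since $c\mapsto(1-c)p+c\,\pnext$ moves at speed $\|\pnext-p\|$. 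The assertion at the top of the procedure gives $\phi(0)=\hat g(p)<0$; if $\hat g(\pnext)<0$ the procedure returns $\pnext$, already strictly feasible, so it remains to handle $\phi(1)=\hat g(\pnext)\ge 0$, which is exactly when the bisection loop runs.

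Next I would record the loop invariant $\phi(a)<0\le\phi(b)$: it holds initially ($\phi(0)<0$, $\phi(1)\ge0$), the update $b\gets c$ happens only when $\phi(c)>0$, and $a\gets c$ only when $\phi(c)\le0$ yet $\phi(c)\notin[-\delta,0]$, i.e.\ $\phi(c)<-\delta<0$. Hence the bracket $[a,b]$ halves every iteration (length $2^{-(k-1)}$ when the $k$-th midpoint is tested) and, by the intermediate value theorem, always contains a zero $c^{*}$ of $\phi$. For finite termination in the merely continuous case I would argue by contradiction: if the loop never stops, $a_k\uparrow$ and $b_k\downarrow$ to a common limit $\bar c$; continuity and the invariant force $\phi(\bar c)=0$, the midpoints converge to $\bar c$, so $\phi(c_k)\to0$ and in particular $\phi(c_k)>-\delta$ eventually, which means the branch $a\gets c$ is taken only finitely often; but then $a$ freezes at some $a^{*}$, $b_k\to a^{*}$, and $\phi(a^{*})=\lim\phi(b_k)\ge0$ contradicts $\phi(a^{*})<0$. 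The point returned from the loop has $\hat g$-value in $[-\delta,0]$, hence is feasible (and strictly feasible except in the degenerate event that this value equals $0$, which can also be excluded by bisecting to $[-\delta,-\delta/2]$).

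For the convergence rate under Lipschitz continuity I would use that the $k$-th tested midpoint $c_k$ is the centre of a bracket of length $2^{-(k-1)}$ containing a zero $c^{*}$, so $|c_k-c^{*}|\le 2^{-k}$ and therefore $|\phi(c_k)|=|\phi(c_k)-\phi(c^{*})|\le L\|\pnext-p\|\,2^{-k}$; once $k\ge\log_2\!\frac{L\|\pnext-p\|}{\delta}$ this yields $\phi(c_k)\ge-\delta$, and for the matching inequality $\phi(c_k)\le0$ I would invoke $\phi(c_k)\le\phi(a_k)+L\|\pnext-p\|(c_k-a_k)\le\phi(a_k)+\delta$ together with $\phi(a_k)<-\delta$, valid once $a$ has been updated at least once (a short argument of the same flavour as the termination proof shows that this happens before the estimate takes effect, or else the loop has already stopped). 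The hard part is really the second paragraph: making the termination argument in the merely continuous case airtight is a compactness statement with no quantitative content, and one has to be careful that the tolerance interval $[-\delta,0]$ is one–sided, so that the clean two–sided bound $|\phi(c_k)|\le\delta$ does not by itself certify $\phi(c_k)\in[-\delta,0]$.
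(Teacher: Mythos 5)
Your proposal follows essentially the same route as the paper's proof: restrict $\hat g$ to the segment via $\phi(c)=\hat g((1-c)p+c\,\pnext)$, use the halving of the bisection bracket, transfer the Lipschitz constant to $L\|\pnext-p\|$ on $[0,1]$, and compare the resulting bound on the function-value gap across the bracket with the width $\delta$ of the acceptance interval to get the $\log_2\frac{L\|\pnext-p\|}{\delta}$ rate. Your additional care about the one-sided tolerance interval, the finite-termination argument for the merely continuous case, and the possibility that the returned point has $\hat g=0$ (so is feasible but not strictly so) addresses details the paper's own proof passes over silently, and is consistent with its intent.
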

\begin{proof}
    Denote the iterations from Algorithm \ref{alg1} by $b^k$ and $a^k$ with the initialization $b^0=1$ and $a^0=0$. Since the interval halves at each iteration, we obtain
    \begin{equation}\label{eq:conv1}
    b^k - a^k = \frac12(b^{k-1}-a^{k-1}) = \frac{1}{2^k}(b^0 - a^0) = \frac{1}{2^k}.
    \end{equation}
    Define
    $$
    \tilde g(c) = \hat g((1-c)p + c\pnext).
    $$
    Since we have
    $$
    \aligned
    \tilde g(b^0) &= \tilde g(1) = \hat g(\pnext) \le 0, \\
    \tilde g(a^0) &= \tilde g(0) = \hat g(p) < 0.
    \endaligned
    $$
    due to the construction of the algorithm, we have $\tilde g(b^k) \le 0$ and $\tilde g(a^k) > 0$ for all $k$. Since $\tilde g$ is a continuous function due to continuity of $D_2\circ g$, Algorithm \ref{alg1} converges in a finite number of iterations.
    
    Assume that $g\circ D_2$ is a Lipschitz continuous function with modulus $L$, then $\hat g$ is also Lipschitz continuous with modulus $L$. Then
    $$
    \aligned
    |\tilde g&(c_1) - \tilde g(c_2)| \\
    &= |\hat g((1-c_1)p + c_1\pnext) - \hat g((1-c_2)p + c_2\pnext)| \\
    &\le L\| (1-c_1)p +c_1\pnext - (1-c_2)p - c_2\pnext\| \\
    &= L\| (\pnext - p)(c_1-c_2) \| \\
    &\le L\|\pnext - p\| |c_1 - c_2 | \\
    \endaligned
    $$
    Therefore, $\tilde g$ is a Lipschitz continuous function with constant $L\|\pnext - p\|$. Together with \eqref{eq:conv1} this implies
    \begin{equation}\label{eq:conv2}
    \aligned
    \tilde g(b^k) - \tilde g(a^k) &\le L\|\pnext-p\|(b^k - a^k) \\
    &= L\|\pnext-p\|\frac{1}{2^k}.
    \endaligned
    \end{equation}
    If the algorithm did not finish within $k$ iterations, we have
    \begin{equation}\label{eq:conv3}
    \tilde g(b^k) - \tilde g(a^k) \ge \delta.
    \end{equation}
    The comparison of \eqref{eq:conv2} and \eqref{eq:conv3} implies that the algorithm cannot run for more than $\log_2 \frac{L\|\pnext-p\|}{\delta}$ iterations, which finishes the proof.
\end{proof}

\section{Additional results}

This section extends the results from the main manuscript body. We will always present a figure and then compare it with the corresponding figure from the manuscript body. The former figures start with a letter while the latter figures with a digit.

Figure \ref{fig:compare_matrix2} shows the untargeted attacks for the non-robust network. It corresponds to Figure \ref{fig:compare_matrix} from the manuscript body. The images are again nice, with the Wasserstein attack performing better than the $l_2$ attack. The small digit in each subfigure shows to which class the digit was misclassified. As we have already mentioned, our method always works with feasible points and, therefore, all digits were successfully misclassified. In other words, these images were generated randomly without the need to select them manually.

Figure \ref{fig:compare_attacks2} shows the attacks on the robust network of \cite{madry2019deep}. It corresponds to Figure \ref{fig:compare_attacks} from the main manuscript body. The quality of the CW attack increased. It now preserves the least significant bit structure, and the perturbations shifted towards the digit edges. The Wasserstein attack keeps the superb performance with visually the same results as in Figure \ref{fig:compare_attacks}. We conclude that our attacks are efficient against adversarially trained networks.

\begin{figure*}[!ht]
    \centering
    \begin{minipage}{0.48\textwidth}
    \includegraphics[width=\textwidth]{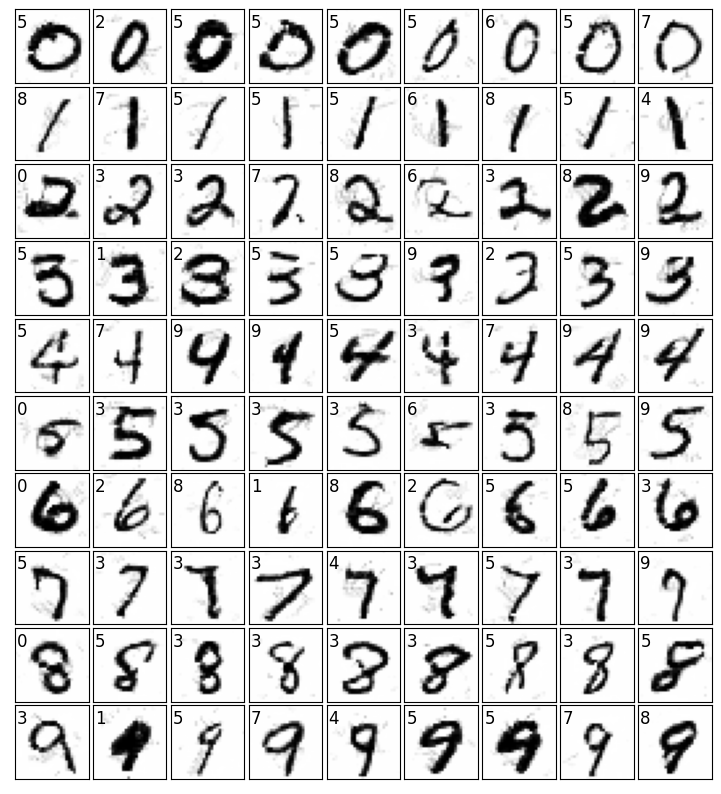}
    \end{minipage}
    \begin{minipage}{0.48\textwidth}
    \includegraphics[width=\textwidth]{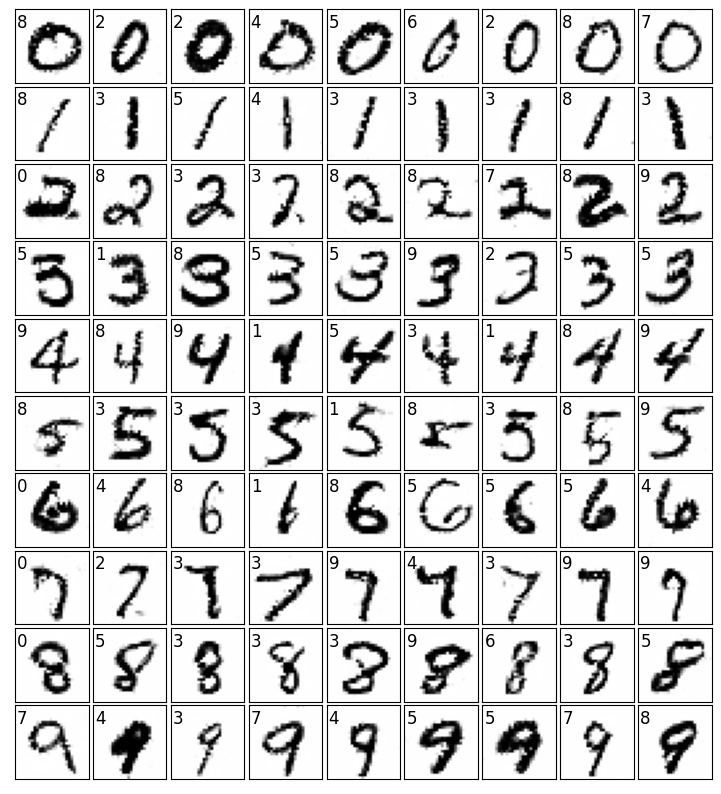}
    \end{minipage}
\caption{Untargeted $l_2$ attacks (left) and Wasserstein attacks (right). Rows represent the class. The small number in each subfigure shows to which class the digit was classified.}
\label{fig:compare_matrix2}
\end{figure*}

Figure \ref{fig:imagenet_intermediate2} shows the effect of the choices of the intermediate layer to perturb and of the initial point for Algorithm~\ref{alg2}. It corresponds to Figure \ref{fig:imagenet_intermediate} from the main manuscript body. The first column shows the point which was used to initialize Algorithm \ref{alg2}. The next three columns present the adversarial images with the perturbation inserted in different intermediate layers. The last column shows the original image. The effect of the initialization is negligible when perturbing the third intermediate layer (column 4). However, it has a huge impact when perturbing earlier intermediate layers. The intermediate latent representation of the second broccoli (rows 2 and~4) carries a preference for the white colour, which is visible in the adversarial image for the first intermediate layer (column 2).

Figure \ref{fig:imagenet_development2} also shows the effect of the initial point for Algorithm~\ref{alg2}. It corresponds to Figure \ref{fig:imagenet_development} from the main manuscript body. We see that the features of the initial broccoli (column 1) gradually incorporate into the dog image (columns 2-5). The adversarial image (column 6) still have some connection to the initial broccoli, for example, in the background colour. The adversarial image is close to the original image (column 7). This figure shows once again that our algorithm works with high-level features and not pixel modifications.

\begin{figure*}[!ht]
    \centering
    \includegraphics[width=\textwidth]{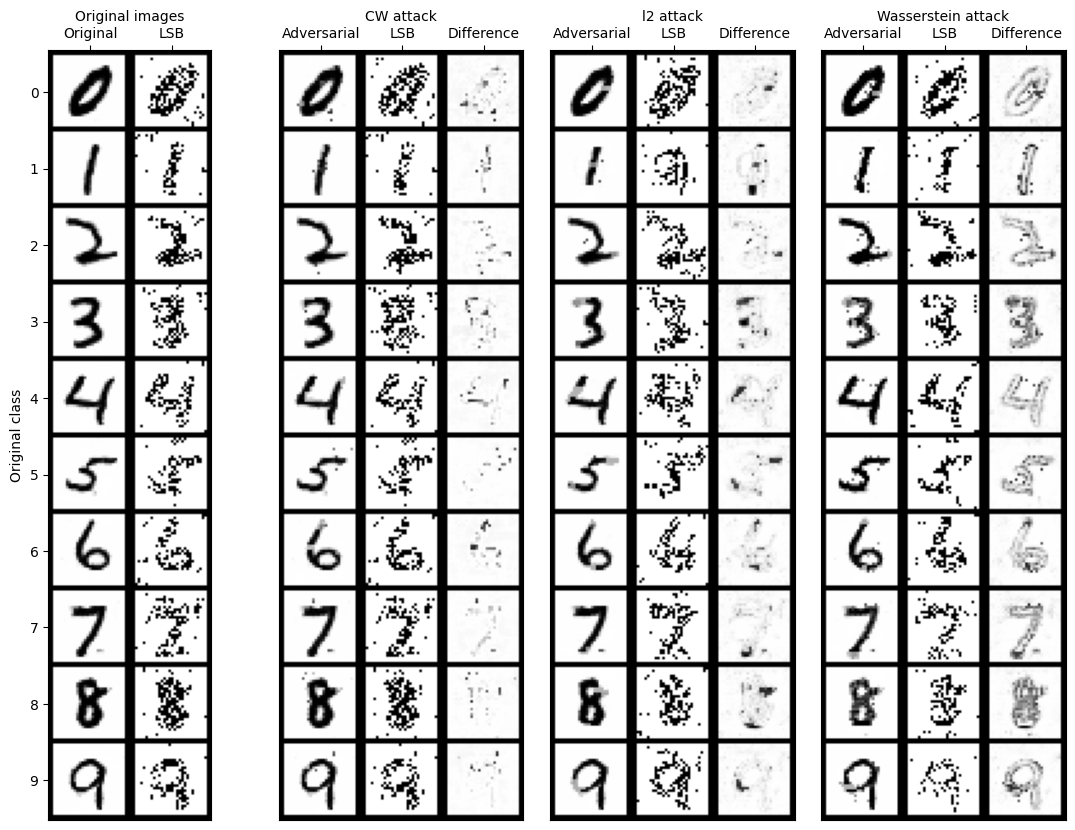}
\caption{Comparison of CW and our attacks on the robust network. Least significant bit (LSB) is defined in \eqref{eq:lsb}, the difference is between the original and adversarial images.}
\label{fig:compare_attacks2}
\end{figure*}

\begin{figure*}[!ht]
    \centering
    \includegraphics[width=0.95\textwidth]{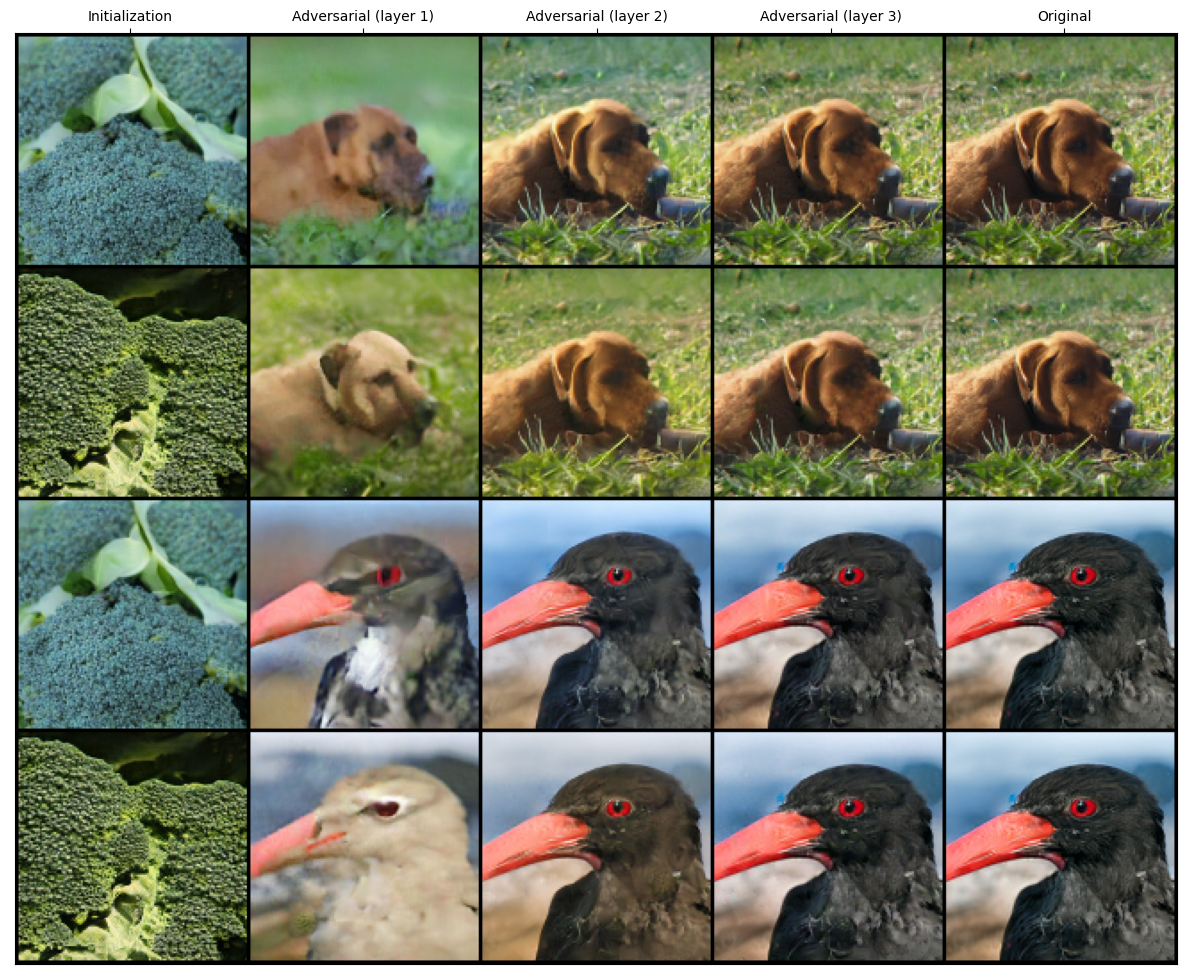}
\caption{Effect of the intermediate layer choice. The columns show initial point for Algorithm \ref{alg2} (column 1), the adversarial image when perturbations were performed in the first (column 2), second (column 3) and third (column 4) intermediate layer and the original image (column 5).}
\label{fig:imagenet_intermediate2}
\end{figure*}

\begin{figure*}[!ht]
    \centering
    \includegraphics[width=\textwidth]{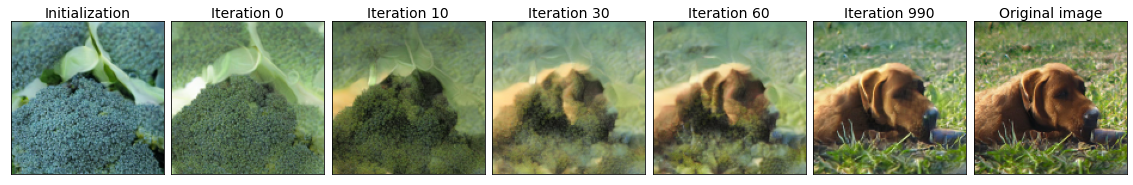}
    \includegraphics[width=\textwidth]{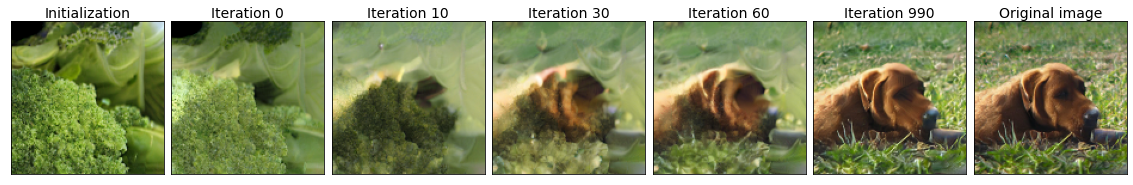}
\caption{Development of perturbed images over iterations. Broccoli features get gradually incorporated into the dog.}
\label{fig:imagenet_development2}
\end{figure*}

\end{document}